\DeclareMathOperator*{\argmax}{argmax}
\newtheorem{theorem}{Theorem}
\theoremstyle{definition}
\newtheorem{definition}{Definition}
\title{
Self-Supervised Online Reward Shaping \\ in Sparse-Reward Environments
}
\author{Farzan Memarian$^*$$^\dagger$, Wonjoon Goo$^*$, Rudolf Lioutikov, Scott Niekum, and Ufuk Tocpu \\ University of Texas at Austin, TX, USA
\thanks{* Equal Contribution. }
\thanks{$\dagger$ Corresponding Author: {\tt\small farzan.memarian@utexas.edu}}%
\thanks{$^1$ Farzan Memarian is with the Oden Institute for Computational Engineering and Sciences, University of Texas at Austin, TX, USA. Wonjoon Goo, Rudolf Lioutikov and Scott Niekum are with the Department of Computer Science, University of Texas at Austin, TX, USA. Ufuk Topcu is with the Department of Aerospace Engineering and  Engineering  Mechanics, University of Texas at Austin, TX, USA.}

}
\begin{document}

\maketitle
\thispagestyle{fancy}
\pagestyle{empty}

\begin{abstract}

We introduce Self-supervised Online Reward Shaping (SORS), which aims to improve the sample efficiency of any RL algorithm in sparse-reward environments by automatically densifying rewards.
The proposed framework alternates between classification-based reward inference and policy update steps—the original sparse reward provides a self-supervisory signal for reward inference by ranking trajectories that the agent observes, while the policy update is performed with the newly inferred, typically dense reward function.
We introduce theory that shows that, under certain conditions, this alteration of the reward function will not change the optimal policy of the original MDP, while potentially increasing learning speed significantly.
Experimental results on several sparse-reward environments demonstrate that, across multiple domains, the proposed algorithm is not only significantly more sample efficient than a standard RL baseline using sparse rewards, but, at times, also achieves similar sample efficiency compared to when hand-designed dense reward functions are used.

\end{abstract}

\section{Introduction}

While reinforcement learning (RL) algorithms have achieved tremendous success in many tasks ranging from Atari games \cite{mnih2015human, mnih2016asynchronous, van2015deep} to robotics control problems \cite{gu2017deep, kober2013reinforcement, levine2016end}, they often struggle in environments with sparse rewards. 
In dense reward settings, the agent receives diverse rewards in most states, e.g., a reward proportional to distance to the goal, rather than a constant reward everywhere but the goal. Such dense rewards lead to frequent updates that quickly allow the agent to differentiate good states from bad ones.

Unfortunately, designing a good, dense reward function is known to be a difficult task \cite{abbeel2004apprenticeship,sutton2018reinforcementlearning}, especially for non-experts. 
In addition, RL approaches can easily exploit badly designed rewards, get stuck in local optima and induce behavior that the designer did not intend \cite{hadfield2017inverse}. 
In contrast, goal-based sparse rewards are appealing since they do not suffer from the reward exploitation (commonly known as reward hacking) problem to the same extent.
However, sparse rewards only provide rewards for few select states.
Reward sparseness complicates the temporal credit assignment problem significantly and negatively impacts the overall learning process.
Reward shaping is a commonly used approach to speed up RL in environments with sparse rewards \cite{ng1999policy,wiewiora2003principled,brys2015reinforcement}. 
However, altering the ground-truth reward can potentially change the optimal policy and, hence, induce undesired behavior.

In this paper, we propose a novel RL framework that efficiently learns a policy for sparse-reward environments by training on dense rewards that are inferred in a self-supervised manner.
Our framework---\textbf{S}elf-supervised \textbf{O}nline \textbf{R}eward \textbf{S}haping (SORS)---can speed up the policy learning process without requiring any domain knowledge or external supervision, and the proposed framework is compatible with any existing RL algorithm.

SORS alternates between updating the policy using an RL algorithm of choice and inferring a dense reward function from past observations. 
It infers a reward using a classification-based reward inference algorithm, T-REX \cite{brown2019extrapolating}. 
However, unlike T-REX, instead of requiring manual rankings over the trajectories, SORS uses the sparse reward as a self-supervised learning signal to rank the trajectories generated by the agent during learning.

We justify the rationale behind the reward inference performed by SORS based on the following insight: Any reward function induces a total order over the trajectory space by means of the discounted return it assigns to trajectories. 
We then provide a theorem that indicates any two reward functions that induce the same \textit{total order} over the trajectory space, induce identical sets of optimal policies under mild assumptions on the dynamics of the environment. 
The objective function that SORS optimizes for reward inference encourages the dense reward function to induce the same total order as the sparse reward over the trajectory space.

Our empirical results on several sparse reward MuJoCo \cite{todorov2012mujoco} locomotion tasks show that SORS can significantly improve the sample efficiency of the state-of-the-art baseline algorithm, namely Soft-Actor-Critic (SAC). 
SORS even achieves comparable sample efficiency to a baseline that uses a hand-designed dense reward function.

\noindent We make the following contributions: 
\begin{itemize}
\item We propose a novel reward shaping algorithm, SORS, that pairs with any existing RL algorithm, performs self-supervised online reward shaping, and can improve the sample efficiency of the RL algorithm in sparse-reward environments.
\item We provide theoretical justification for our approach by showing a sufficient condition for two reward functions to share the same set of optimal policies. 
We use this condition to show that, under some assumptions, replacing the ground-truth sparse reward function with the inferred shaped reward function does not alter the optimal policies.
\item We empirically demonstrate that the proposed method converges significantly faster than a standard baseline RL algorithm, namely Soft Actor-Critic (SAC) \cite{haarnoja2018soft} for several sparse-reward MuJoCo locomotion tasks.
\end{itemize}

\section{Related work}
\label{sec:related-work}

\subsection{Reward Shaping}
Reward shaping is a method to incorporate domain knowledge to densify reward functions.
Typically, the goal of reward shaping is to speed up learning and overcoming the challenges of exploration and credit assignment when the environment only returns a sparse, uninformative, or delayed reward. 

In one of the seminal works on reward shaping \cite{ng1999policy}, the authors study the forms of shaped rewards which induce the same optimal policy as the ground-truth reward function.
Specifically, they proved that the so-called \textit{potential-based} reward shaping is guaranteed not to alter the optimal policy.
The only requirement is that the potential function needs to be a function of states. 
While they provide one specific form for reward shaping without altering the optimal policy of the MDP, they do not provide any practical algorithm for acquiring a potential function that can improve the learning of optimal behavior.
They argue that the optimal state value function is a good shaping potential, but this insight is not helpful in practice, as the goal of RL is finding the optimal value function is the goal of RL and we do not have it a priori. 
In this work, we propose an alternative reward shaping framework in which we replace the original reward function with another shaped reward function which is updated online as the RL agent interacts with the environment. 
Our reward shaping approach does not require any human guidance or extra information.

Devlin et al. \cite{devlin2012dynamic} build on potential-based shaping \cite{ng1999policy} to prove that dynamic shaping of the reward function does not change the optimal policy, provided that we use the potential-based shaping framework. 
Other researchers \cite{wiewiora2003principled} have extended potential-based shaping \cite{ng1999policy} to potential functions that are functions of state and action pairs rather than states alone. 
They propose two methods for providing potential-based advice, namely, look-ahead advice, and look-back advice. 

In another interesting work on reward shaping \cite{trott2019keeping}, the authors propose a new RL objective which uses a distance-to-goal shaped reward function.
They unroll the policy to produce pairs of trajectories from each starting point and use the difference between the two rollouts to discover and avoid local optima. 
Unlike their work, we do not need to alter the way the base RL algorithm collects experiences. 
Moreover, we do not rely on using a distance-to-goal shaped reward function, instead we learn a dense reward function which asymptotically leads to optimal policies that equivalent to those of the original sparse reward. 

There is prior work on automatic reward shaping \cite{zou2019reward}, where they propose reward shaping via meta-learning. 
Their method can automatically learn an efficient reward shaping for new tasks, assuming the state space is shared among the meta-learning tasks.  
This work differs from ours in that it is in the context of meta learning, whereas our automatic reward shaping algorithm works even for a single task, and we do not need to train our model on a library of prior tasks. 

Brys et al. propose a method to use expert demonstrations to accelerate RL by biasing the exploration through reward shaping \cite{brys2015reinforcement},. 
They propose a potential function which is higher for state-action pairs similar to those seen in the demonstrations and low for dissimilar state-action pairs. 
Another related work studies online learning of intrinsic reward functions as a way to improve RL algorithms \cite{pathak2017curiosity}.

\subsection{Sparse Rewards}

RL in sparse-reward environments has been tackled in various ways.
For instance, the authors of \cite{riedmiller2018learning} address sparse-reward environments that can be de-composed into smaller subtasks. 
They learn a high-level scheduler and several auxiliary policies and show that this leads to improved exploration. 
Their algorithm learns to provide internal auxiliary sparse rewards in addition to the original sparse reward. 
Our algorithm is different from this line of work as our algorithm works for singular tasks, and we do not use any hierarchy of decision making. 
We learn a dense reward which assigns a reward to every individual state, rather than merely providing an auxiliary reward on selected states. 

Other related work \cite{memarian2020active} on learning from sparse rewards proposes a method to learn a temporally extended episodic task composed of several subtasks where the environment returns a sparse reward only at the end of the episodes.
Using the environment’s sparse feedback and queries from a demonstrator, they learn the high-level task structure in the form of a deterministic finite state automaton, and then use the learned task structure in an inverse reinforcement learning (IRL) framework to infer a dense reward function for each subtask. 
Our work differs in that we do not rely on an expert to provide demonstrations and instead we learn to shape the sparse reward relying only on the environment's sparse feedback.

\subsection{Learning a Reward Function From Preference/Ranking}
Several prior works have studied the problem of inferring a reward function from human preferences or rankings over demonstrations. 
One early work on learning from preferences \cite{christiano2017deep} proposes an active learning approach to infer a reward function that encodes the human’s preferences. 
They train a policy and a reward network simultaneously. 
At each iteration, they use the policy to produce pairs of trajectories and then query the human for their preference over the pair of trajectories and use these preferences to improve the reward by minimizing a preference-based loss function. 
They then updated the policy based on the improved reward. 
In other work \cite{ibarz2018reward}, Ibarz et al. extend the work Christiano et al. \cite{christiano2017deep} to use an initial set of demonstrations to pre-train the policy, rather than start training from a random policy. 
Brown et al. introduce the T-REX algorithm, which infers a reward function from a given set of ranked demonstrations \cite{brown2019extrapolating},. 
Their algorithm samples pairs of demonstrations from this initial set of demonstrations and uses the ranking to label which demonstration is preferred in a given pair. 
It then uses a binary classification loss over these preferences to update the reward function.
They show their algorithm learns reward functions that, when optimized for a policy, often exceed the performance of the best demonstrations.

We use an adaptation of the T-REX algorithm for the reward inference part of our algorithm.  However, our work is different from the above works in two ways. First, we do not need an initial set of demonstrations. Second, our algorithm does not require a human in the loop---instead we leverage the environment's sparse feedback to rank the collected trajectories and then use the set of ranked trajectories for inferring a dense reward function to accelerate policy learning.  

In another work, Brown et al. propose an algorithm to infer a reward from a set of sub-optimal demonstrations that are not ranked by an expert \cite{brown2019ranking}. 
Using the set of demonstrations, they perform behavioral cloning to learn a policy. 
They then inject noise in the policy to produce various qualities of trajectories and rank the trajectories based on the level of noise used in producing them. 
Then they proceed to learn a reward from the set of ranked trajectories. 
Our work differs, in that our self-supervisory signal comes from a known sparse reward signal on agent-collected trajectories, and
our objective is to use the learned reward function as a way to accelerate policy learning, rather than imitate demonstrations.

\section{Background and Preliminaries} \label{sec:background}

\subsection{Reinforcement Learning}
A Markov decision process (MDP) is defined as $\mathcal{M} = \langle S, A,T, r, \gamma \rangle$, in which $S$ is the state space, $A$ is the action space, $T: S \times A \to \mathcal{P}(S)$ is the transition dynamics which maps any given state and action pair into a probability distribution over the next state, $r: S \times A \to \mathbb{R}$ is the reward function, and $\gamma$ is the discount factor. 
At each discrete time step, the MDP is in a state $s$, the agent takes an action $a$, and as a result, the MDP transitions into a new state, $s' \sim T(s, a)$, and the agent receives a scalar valued reward $r(s,a,s')$.
A policy $\pi(a|s): S \to \mathcal{P}(A)$ is defined as a probability distribution over actions at any given state $s$. 
Given a policy $\pi$, we have the following definitions:
\begin{align*}
    &Q^\pi(s,a) = r(s,a) + \gamma \, \mathbb{E}_{a' \sim \pi(a'|s')}  \mathbb{E}_{s'\sim T(s,a)}  [Q(s',a')]  \\
    &V^\pi(s) = \mathbb{E}_{a \sim \pi(a|s)} [Q^\pi(s,a)]
\end{align*}
where $ Q^\pi(s,a)$, $V^\pi(s)$ are respectively the action-value function and the state value function for the policy $\pi$.
The goal of RL is to find a policy with maximal value function at each state, or find the maximal value function directly. 
A trajectory $\tau= \{s_t,a_t \}_{t=1}^{|\tau|}$ is a sequence of state action pairs obtained by running a policy on the MDP, where subscript $t$ is the time index of the trajectory, i.e., each trajectory starts from $(s_1, a_1)$. 
We define the discounted return of a trajectory according to reward function $r$ as: $R_r(\tau) := \sum_{t=1}^{|\tau|} \gamma^{t-1} r(s_t,a_t)$, where $(s_t,a_t)$ is the state and action pair of the trajectory $\tau$ at time $t$.

\subsection{Reward Shaping}
\label{sec:preference-oracle}

Given an MDP with a reward function $r(s,a)$, reward shaping is the process of replacing the original reward with another reward function, or augmenting the original reward function with an auxiliary reward function $F(s,a):S \times A \to \mathbb{R}$ to create a new reward function\cite{ng1999policy}; Concretely, $r_{sh}(s,a) = r_{new}(s,a) ~~ \text{or }~ r_{sh}(s,a) = r(s,a) + F(s,a)$,  where $r_{sh}(s,a)$ is the shaped reward. 
While the goal of reward shaping is to speed up RL, in general, a shaped reward could induce a different optimal policy than the original reward.

\section{Preference Oracle and Equivalency of Reward Functions}

Consider a reward-free MDP $\mathcal{M} = \langle S,A,T,\gamma \rangle$, and a  \textit{preference oracle} which is a binary relation $\leq_{p*}$ that defines a total order on the set of all trajectories sampled from the MDP. 
We can order all possible trajectories based on the total order defined by the oracle:
\begin{equation*}
\tau_1 \leq_{p*} \tau_2 \leq_{p*} \cdots \leq_{p*} \tau_k \leq_{p*} \cdots.
\end{equation*}
Note that any deterministic reward function $r(s,a)$ can serve as a preference oracle via the discounted return $R_r$ under that reward function:
\begin{equation*}
\tau_i \leq_{r} \tau_j \Leftrightarrow R_r(\tau_i) \leq R_r(\tau_j).
\end{equation*}
where $\leq_{r}$ is the binary relation defined by reward function $r$. Using the notion of total order, we will define a set of reward functions that share the same set of optimal policies; specifically, we will prove that two reward functions that produce the same total order will also yield the same set of optimal policies under deterministic transition dynamics. 
We begin by formally defining the total order equivalency between two reward functions.
\begin{definition}[Total order equivalency]\label{def:total-order-equivalency}
For a given reward-free MDP $\mathcal{M} = \langle S,A,T,\gamma \rangle$ with possible trajectories $\mathcal{T} = (S \times A)^{+}$, the total order equivalency of reward functions $r_1$ and $r_2$ is defined as 
\begin{equation*}
    r_1 \equiv r_2\ \text{iff}\ \tau_i \leq_{r_1} \tau_j \Leftrightarrow \tau_i \leq_{r_2} \tau_j \ \forall \tau_i, \tau_j \in \mathcal{T}.
\end{equation*}
\end{definition}

\begin{theorem}
\label{theorem:policy-equivalency}
Given a deterministic reward-free MDP $\mathcal{M} = \langle S,A,T,\gamma \rangle$, if two reward functions $r$ and $r'$ are total order equivalent, they will induce the same set of optimal policies, i.e., $r \equiv r' \implies \{ \pi^*_{r}(s) \} = \{ \pi^*_{r'}(s) \}$, where $ \{ \pi^*_{r} \}$ and $\{ \pi^*_{r'} \}$ are the sets of optimal policies induced by reward functions $r$ and $r'$ respectively.
\end{theorem}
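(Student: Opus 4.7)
The plan is to reduce the claim about optimal-policy sets to a statement about return-maximizing trajectories from each state, and then read off the theorem from the total-order hypothesis. The key observation enabled by determinism is that once the start state $s$ is fixed, a deterministic policy $\pi$ traces out a unique trajectory $\tau^\pi_s$ and its value equals the return, $V^\pi_r(s) = R_r(\tau^\pi_s)$. More generally, writing $\mathcal{T}_s \subseteq \mathcal{T}$ for the trajectories starting at $s$ that are consistent with $T$, the Bellman optimality equation gives $V^*_r(s) = \max_{\tau \in \mathcal{T}_s} R_r(\tau)$ under the standard bounded-reward, $\gamma<1$ assumption.

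Let $\mathcal{T}^*_s(r) := \argmax_{\tau \in \mathcal{T}_s} R_r(\tau)$. Because total order equivalency forces $\leq_r$ and $\leq_{r'}$ to agree on every pair in $\mathcal{T}$, they agree on every pair in $\mathcal{T}_s$, and a maximizer of a total order is determined purely by pairwise comparisons; hence $\mathcal{T}^*_s(r) = \mathcal{T}^*_s(r')$ for every state $s$. I then characterize the set of optimal actions at $s$ under $r$, $\mathcal{A}^*_r(s) := \argmax_a Q^*_r(s,a)$, as the set of first actions of trajectories in $\mathcal{T}^*_s(r)$: inclusion $(\supseteq)$ uses that $Q^*_r(s,a) = r(s,a) + \gamma V^*_r(T(s,a))$ with $V^*_r(T(s,a))$ realized by some trajectory; inclusion $(\subseteq)$ uses that appending any deterministic optimal continuation from $T(s,a)$ to $a \in \mathcal{A}^*_r(s)$ yields a trajectory of return $V^*_r(s)$. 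Combining this with the previous step gives $\mathcal{A}^*_r(s) = \mathcal{A}^*_{r'}(s)$ for every $s$.

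To finish, a (possibly stochastic) policy $\pi$ is optimal for $r$ iff $\mathrm{supp}(\pi(\cdot|s)) \subseteq \mathcal{A}^*_r(s)$ at every state---a standard consequence of the Bellman optimality equation---and the right-hand side is identical for $r$ and $r'$, so the sets of optimal policies coincide. The subtlest point will be the trajectory-based characterization of $\mathcal{A}^*_r(s)$ in the infinite-horizon, stochastic-policy setting, since exhibiting an optimal continuation trajectory appeals to the existence of a deterministic optimal policy at every successor state. Under the customary bounded-reward and $\gamma<1$ assumptions this is routine, and the remainder of the argument is essentially bookkeeping on the total order.
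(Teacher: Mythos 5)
Your proof is correct and follows essentially the same route as the paper's: determinism lets you identify optimal values with maxima of trajectory returns, total order equivalence preserves the set of maximizing trajectories, and hence the optimal action sets (and policies) coincide. The only cosmetic difference is that you work with the trajectory sets $\mathcal{T}_s$ from each state and extract first actions (also handling stochastic policies explicitly), whereas the paper argues directly on $\mathcal{T}_{s,a}$ via a chain of equivalences on $Q^*_r$ and $Q^*_{r'}$.
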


\begin{proof}
The state-action value function of a policy $\pi$ at a given state and action pair $s,a$ is defined as:
\begin{align*}
Q^{\pi}(s,a) = \mathbb{E}_{\pi,T}\left[R_r\left(\tau_{s,a}\right)\right]
\end{align*}
which is equal to the expected return over all trajectories $\tau_{s,a} \in \mathcal{T}_{s,a}$ that start with action $a$ at state $s$ and follow policy $\pi$ under the transition dynamics $T$.
Given the optimal state-action value function $Q_r^*$ for reward function $r$, an optimal policy under the reward function $r$ is derived as $\pi^*_r(s) = \argmax_a Q^*_r(s,a)$ \footnote{There can be more than one optimal policy corresponding to a given optimal Q-function. For example, if multiple actions maximize the Q-function at a given state.}. 
Following these definitions, it is clear that any action chosen by an optimal policy will yield the highest possible Q value, i.e.,
\begin{align*}
Q^*_r(s,\pi_r^*(s)) \geq Q^*_r(s,b) \quad\forall b \in A.
\end{align*} 

For MDPs with deterministic dynamics, an optimal policy under a reward function $r$ will induce a set of optimal trajectories starting from any state-action pair, where all the optimal trajectories receive equal returns from reward function $r$. 
If the policy is deterministic, the set of optimal trajectories will include only one member. 
Hence, for deterministic MDPs the optimal Q-function for state-action pair $(s,a)$ and an optimal trajectory starting from the same pair are
\begin{align*}
    Q^*_r(s,a) &=  \max_{\tau \in \mathcal{T}_{s,a}} R_r(\tau)\text{, and}\\
    \tau^*(s,a) &= \argmax_{\tau \in \mathcal{T}_{s,a}} R_r(\tau).
\end{align*}

Using the total order relation $\leq_{r}$ induced by the reward function $r$, and the equivalence between $r$ and $r'$, we conclude that the two reward functions share the same set of optimal policies. 
In other words, if $\pi_r^*$ is an optimal policy under reward function $r$, it is an optimal policy under reward function $r'$ as well:
\begin{align*}
    && \forall b \in A,\ Q^*_r(s,b) &\leq Q^*_r(s,\pi^*_r(s)) \\
    &\Leftrightarrow &\max_b \max_{\tau \in \mathcal{T}_{s,b}} R_r(\tau) &\leq \max_{\tau \in \mathcal{T}_{s,\pi^*_r(s)}} R_r(\tau)\\
    &\Leftrightarrow &\max_b \tau^*(s,b) &\leq_{r} \tau^*(s,\pi^*_r(s))\\
    &\Leftrightarrow &\max_b \tau^*(s,b) &\leq_{r'} \tau^*(s,\pi^*_r(s)) \tag*{($\because r \equiv r'$)}\\
    &\Leftrightarrow &\max_b \max_{\tau \in \mathcal{T}_{s,b}} R_{r'}(\tau) &\leq \max_{\tau \in \mathcal{T}_{s,\pi^*_r(s)}} R_{r'}(\tau)\\
    &\Leftrightarrow &\forall b \in A,\ Q^*_{r'}(s,b) &\leq Q^*_{r'}(s,\pi^*_r(s))\\
    &\Leftrightarrow &\span \text{$\pi^*_r(s)$ is an optimal policy under $r'$.} \quad\quad\quad\qedhere
\end{align*}
\end{proof}

Theorem~\ref{theorem:policy-equivalency} suggests that a set of optimal policies is uniquely defined by the total order, and there are potentially infinitely many reward functions that share the same set of optimal policies. 
Among these reward functions, some are preferable with respect to efficiency of policy learning. 
While sparse rewards are hard to learn from due to the credit assignment difficulty, a more informative reward function (potentially dense) can exist that shares the same set of optimal policies and is much easier to learn from.
This implication is consistent with the optimal reward problem \cite{sorg2011optimal} and reward shaping \cite{ng1999policy}.

While the specification of a set of reward functions that share the same optimal policy has been studied \cite{ng2000algorithms, brown2019machineteaching}, the proposed theorem is more general in that we do not assume any restriction on the reward function space. 
In \cite{ng2000algorithms}, a behavior equivalence class (BEC) is defined across reward functions that share the same feature vector extractor $\phi(s,a)$, so the reward function space is restricted to the span of the feature vector space. 
The BEC can be very small if the feature space is not diverse enough and defining good features a priori requires external knowledge or a well-designed loss function \cite{brown2020safe}. 
By contrast, our theory does not have any restrictions on the form of reward function, so our notion of equivalence can contain a larger reward function set than BEC.

While the preference oracle can define the optimal behavior that we want to induce, it is unreasonable to assume that we have such an oracle at hand, since it requires a total order over all possible trajectories. 
Instead, previous methods working with orderings between trajectories assume external human input in an online \cite{christiano2017deep} or offline manner \cite{brown2019extrapolating}, with a human preference oracle. 
While we use the same loss function as these approaches, we focus on the reward shaping problem in the sparse reward scenario for which we have a coarse notion of task progress or success. 
Specifically, we try to infer a new, potentially dense reward function that satisfies the order constraints imposed by the sparse reward function and replace the original reward with the inferred reward function to improve the sample efficiency of policy learning. The detailed explanation of the method is presented in the next section.

\section{Method}\label{sec:method}

We tackle the problem of RL in sparse-reward environments. 
The key idea is to infer a dense reward function that shares the same set of optimal policies with the sparse reward, and use the inferred reward function for policy learning to foster faster, sample efficient learning. 
We call the proposed RL framework \textbf{S}elf-supervised \textbf{O}nline \textbf{R}eward \textbf{S}haping (SORS). 

SORS alternates between online reward shaping and reinforcement learning with the inferred reward function. 
During the online reward shaping, a potentially dense reward function is trained with a loss function that encourages the inferred reward to create the same total order over trajectories  as the sparse reward. 
During reinforcement learning, the policy is trained with the inferred reward function and new trajectories are collected in the process. 
Since SORS can work with any RL algorithm, we mainly focus on discussing the online reward shaping module. 
The overall framework with an off-policy RL back-end is described in Algorithm.~\ref{alg:self-supervised-online-reward-shaping}.


We train a parameterized reward function $r_\theta$ by encouraging it to satisfy the order constraints imposed by the ground-truth sparse reward function $r_s$. 
Specifically, we train the reward function with a binary classification loss over pairs of trajectories sampled from the trajectory buffer $\mathcal{D}_\tau$ that saves every observed trajectory during reinforcement learning. 
The loss function is formally defined as:
\begin{equation}\label{eq:reward-loss}
    \begin{aligned}
    L(\theta;\mathcal{D}_\tau) = &- \sum_{(\tau_i , \tau_j) \sim \mathcal{D}_\tau}
\bigg[ \mathbb{I}(\tau_i \leq_{r_s} \tau_j) \log P(\tau_i \prec \tau_j)\\
    &+ (1 - \mathbb{I}(\tau_i \leq_{r_s} \tau_j)) \log P(\tau_i \succ \tau_j) \bigg],
    \end{aligned}
\end{equation}
where $\mathbb{I}(.)$ is the indicator function that evaluates to one if the condition inside it is True, and evaluates to zero otherwise. $P(\tau_i \prec \tau_j)$ is defined as:
\begin{equation}
    P(\tau_i \succ \tau_j) = \frac{\exp(R_{r_\theta}(\tau_i))}{\exp(R_{r_\theta}(\tau_i)) + \exp(R_{r_\theta}(\tau_j))}.
\end{equation}
This same loss function has been used in other work to train a reward function with given pair-wise preference over trajectories, since the loss function encourages the learned reward to assign a higher return to the preferred trajectory \cite{christiano2017deep,brown2019extrapolating}. 
While our final goal is not just to infer a reward based on the pairwise preferences, but learning a reward function that satisfies the total order constraints generated by the ground-truth sparse reward, we empirically find that pairwise preference-based loss can enforce a total order comparable to the ground-truth total order. 
We leave the use of recently proposed ranking loss \cite{rankingloss} that considers the total order as a future work.

Note that SORS does not make use of any external information in addition to what an ordinary RL algorithm requires; the framework receives the exact same observations and rewards from the environment as a baseline RL algorithm would, and it performs the online reward shaping in a \textit{self-supervised} manner. 
Although SORS does not use any extra information, we hypothesize that the additional reward shaping module may improves learning since (1) we can leverage a deep neural network in inferring the relevant features that may be infeasible for a human to define when writing down a reward function, and (2) SORS performs credit assignment not only when learning a value function / policy (as in standard RL), but also by inferring a new reward function from the automatically-ranked trajectories that it collects.

Another way that we could have used the learned dense reward is as a shaping potential, ensuring convergence to the same set of optimal policies \cite{ng1999policy}. 
As discussed in related works, potential-based reward shaping is a theoretically sound way of shaping the reward functions while ensuring that the optimal policy is maintained \cite{ng1999policy}.
However, the effects of potential-based reward shaping can be ``learned away" over time, as they are equivalent to value function initialization \cite{wiewiora2003potential}. Hence, we choose to use the learned dense reward to replace the environment's sparse reward rather than using it as a shaping potential.

\begin{algorithm}[t]

\caption{SORS RL Framework (w/ off-policy RL algo.)} 
\label{alg:self-supervised-online-reward-shaping}

\begin{algorithmic}[1]
\STATE {\bfseries Input:} An environment with sparse reward $r_s(s,a)$. A base RL algorithm of choice (SAC in this work).

\STATE {\bfseries Output:} $\theta$: Parameters  of the dense reward network $r_\theta(s, a)$. $\phi$: Parameters of the policy network $\pi_\phi(a|s)$.

\STATE {\bfseries Hyper-parameters: } $N$: Total number of environment interactions. $P_r, N_r$: Reward update period and number of reward updates for every period. $P_p, N_p$: RL update period and number of RL updates for every period

\STATE Initialize $\theta$ and $\phi$, initialize the trajectory buffer $\mathcal{D}_\tau$ to an empty set.

\STATE \textit{// Collect Initial Trajectories}
\STATE Run a random policy and fill up the trajectory buffer

\FOR{$i = 1\ \dots\ N$}
    \STATE \textit{// Gather Experience}
    \STATE Execute the current stochastic policy and append the transition tuples to the trajectory buffer $\mathcal{D}_\tau$.
    \STATE Replace old trajectories if buffer is full.
    
    \STATE \textit{// Dynamic Reward Shaping Module}
    \IF{$i\ \text{mod}\ P_r = 0$}
        \FOR{$N_r$ iterations}
            \STATE Update $\theta$ with respect to the loss defined in Eq.\ref{eq:reward-loss} with trajectory pairs sampled from $\mathcal{D}_\tau$.
        \ENDFOR
    \ENDIF
    
    \STATE \textit{// Reinforcement Learning Module}
    \IF{$i\ \text{mod}\ P_p = 0$}
        \FOR{$N_p$ iterations}
            \STATE Update $\phi$ according to the latest shaped reward $r_\theta(s,a)$ using the base RL algorithm. 
        \ENDFOR
    \ENDIF
\ENDFOR
\end{algorithmic}
\end{algorithm}

\section{Experiments}
\label{sec:experiments}
\begin{figure*}[ht]
    \centering
    \begin{subfigure}[c]{0.32\linewidth}
        \includegraphics[width=\textwidth]{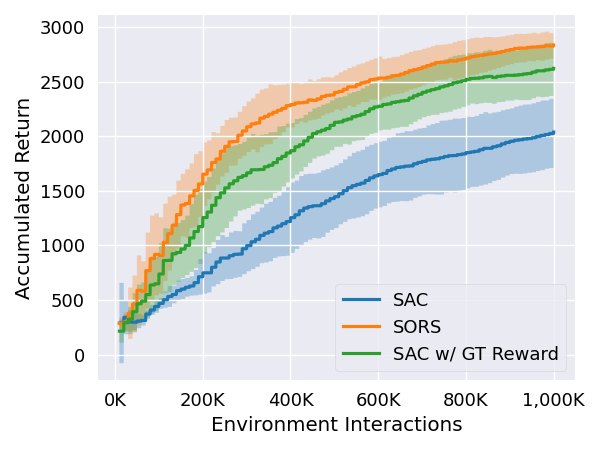}
        \caption{Delayed Hopper}
    \end{subfigure}
    \begin{subfigure}[c]{0.32\linewidth}
        \includegraphics[width=\textwidth]{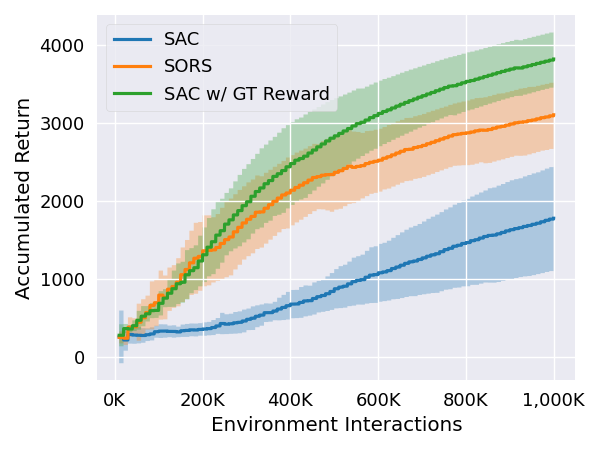}
        \caption{Delayed Walker2D}
    \end{subfigure}
    \begin{subfigure}[c]{0.32\linewidth}
        \includegraphics[width=\textwidth]{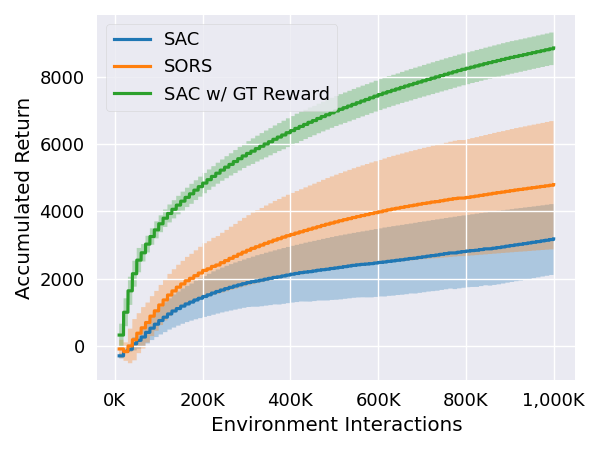}
        \caption{Delayed HalfCheetah}
    \end{subfigure}
    \begin{subfigure}[c]{0.32\linewidth}
        \includegraphics[width=\textwidth]{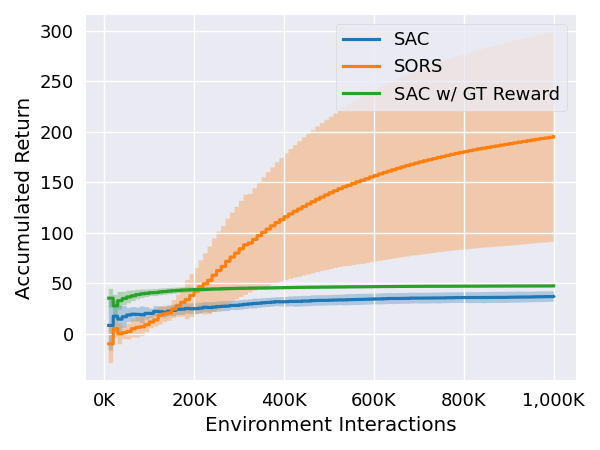}
        \caption{Delayed Swimmer}
    \end{subfigure}
    \begin{subfigure}[c]{0.32\linewidth}
        \includegraphics[width=\textwidth]{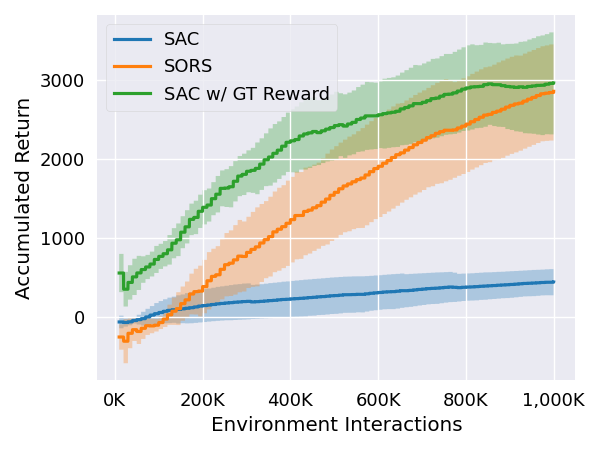}
        \caption{Delayed Ant}
    \end{subfigure}
    \begin{subfigure}[c]{0.32\linewidth}
        \includegraphics[width=\textwidth]{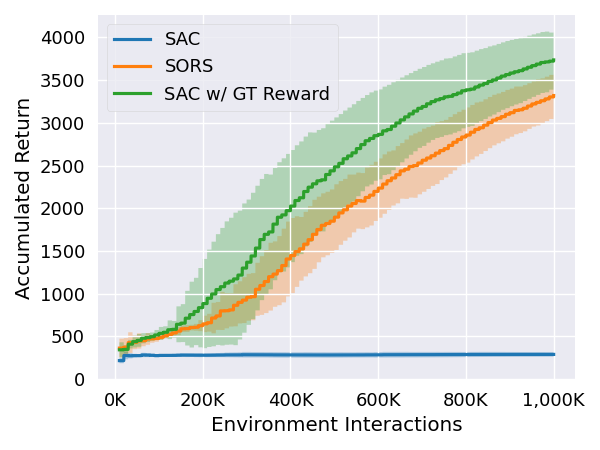}
        \caption{Delayed Humanoid}
    \end{subfigure}
    
    \caption{Learning curves of SAC (without reward shaping, blue line), SORS (with reward shaping, orange line), and SAC with a hand-designed dense reward function (green line). 
    A trajectory is generated  every 10,000 steps with the policy at that step, and the return of the generated trajectory is reported. 
    We smooth the curve with an exponential moving average with a half-life time of 2,000. The results are averaged over 5 different random seeds, and the shaded area represents standard deviation.}
    \label{fig:experiments}
\end{figure*}

We aim to study the following questions: (1) Does the inferred dense reward function improve the sample efficiency of the base RL algorithm? (2) Will the inferred dense reward function induce the same policy as the one induced by the ground-truth reward function?

Reward shaping is particularly helpful when the ground-truth reward is sparse or otherwise hard to learn from. Hence, we test SORS on delayed MuJoCo environments \cite{oh2018self, guo2018generative} in which rewards are accumulated for a given number of time steps (20 time steps) and provided only at the end of these periods or the end of the episode, whichever comes first. 
We use 6 MuJoCo locomotion tasks, namely Hopper, Walker2d, HalfCheetah, Swimmer, Ant, and Humanoid whose observation and action space range from small (8 and 2 for Swimmer respectively) to large (376 and 17 for Humanoid respectively). The code is available online\footnote{Code: \url{https://github.com/hiwonjoon/IROS2021_SORS}}.

We choose the Soft-Actor-Critic (SAC) algorithm as the back-end RL algorithm, and we compare the training progress of the proposed method against a baseline that trains a policy with (1) the delayed reward or (2) the ground-truth dense reward provided by the MuJoCo environment. 
Note that the SAC method is a strong baseline, which is better than or comparable to other regularized RL algorithms \cite{oh2018self} on the MuJoCo environments, and hence we omit other baselines. 

For SAC implementation, both the policy and the Q-functions are modeled by fully connected neural networks with 3 hidden layers, where each layer is of size $256$ and is followed by ReLU non-linearity.
The stochastic policy is modeled by a diagonal multivariate normal distribution and its parameters (mean and covariance) are generated via the policy network. 
We use the same techniques introduced in SAC, such as dual Q-training, use of slowly updated target Q network, and dynamically adjusted entropy regularization coefficient. 
The Q-function and the policy are updated for 50 stochastic gradient descent steps with a mini-batch of size $100$ using Adam optimizer with a learning rate of $3e-4$ after every $50$ interactions with the environment.

The architecture of the neural network modeling the dense reward function is as follows: 
$3$ fully connected hidden layers of size $256$, followed by a fully connected hidden layer of size $4$.
The output of the network up to this point will be a $4$ dimensional feature vector.
All the hidden layers are followed by $tanh$ non-linearity.
The final output of the network is computed by applying a weight vector $w$ to the $4$ dimensional feature vector. 
We enforce the condition $||w||_2=1$ to limit the scale of the reward. 
Both the neural network parameters and the reward weight vector $w$ are trained together by minimizing the loss function given in Eq.~\ref{eq:reward-loss}. 
To reduce the variance between runs and improve the stability of our method, we train an ensemble of 4 reward networks with different initializations and take the average of their outputs to produce the final reward.
At the beginning of training SORS, we run a random policy for $2000$ steps to collect an initial set of trajectories. 
During the rest of the run, we call the dense reward learning module after every $1,000$ environment steps, and perform $100$ stochastic gradient descent steps using a mini-batch of size 10 trajectory pairs.
We keep training both the baseline and SORS until the agent interacts with the environment for $10^6$ steps. 

Figure.~\ref{fig:experiments} shows the comparison between SORS and the baselines on several environments. 
In all $6$ delayed MuJoCo environments, SORS learns faster than the baseline trained on the delayed reward.
Moreover, SORS shows similar sample efficiency and asymptotic performance to the baseline trained with the ground-truth dense reward function on all environments except HalfCheetah. 
This implies that the proposed method can successfully densify the reward function. 
Swimmer is an example of an environment where the policy trained with our method converged to a better policy than the baseline that uses the original dense reward of the environment. 
The results on Swimmer support our hypothesis on the existence of an informative reward function that potentially fosters faster reinforcement learning than the sparse reward, or even the ground truth dense reward function.

\section{Conclusion}

We propose a novel reward shaping method, called SORS, which aims to infer a reward function that satisfies the preference constraints given by the original sparse reward function. 
Since the constraints can be automatically generated by observing the return of the experienced trajectories according to the sparse reward, the proposed algorithm is fully self-supervised.
Our experiments show that SORS enables faster, more sample efficient reinforcement learning by generating a dense reward function that induces a policy with strong performance with respect to the original sparse reward. 
Our experiments show that it is easier to learn from the learned dense reward, as it provides more immediate feedback, even though the assumptions needed for a theoretical guarantee of leaving the set of optimal policies unchanged are not strictly met. 
Along these lines, future work may include an investigation of how to provide guarantees with respect to optimality in the original MDP under weaker assumptions than what we have provided in this work.

\section{Acknowledgement}
\small This work was partly supported through the following grants: ARL W911NF2020132, NSF 1652113 and ARL ACC-APG-RTP W911NF1920333. 
Also, this work has taken place in part in the Personal Autonomous Robotics Lab (PeARL) at The University of Texas at Austin. PeARL research is supported in part by the NSF (IIS-1724157, IIS-1638107, IIS-1749204, IIS-1925082), ONR (N00014-18-2243), AFOSR (FA9550-20-1-0077), and ARO (78372-CS). This research was also sponsored by the Army Research Office under Cooperative Agreement Number W911NF-19-2-0333. The views and conclusions contained in this document are those of the authors and should not be interpreted as representing the official policies, either expressed or implied, of the Army Research Office or the U.S. Government. The U.S. Government is authorized to reproduce and distribute reprints for Government purposes notwithstanding any copyright notation herein.

\bibliography{main}
\bibliographystyle{IEEEtran}

\end{document}